\pgfplotsset{compat=newest}
\newcommand{\diffcolor}[1]{\textcolor{blue}{#1}}
\newcommand{\ea}{$(1+1)$~EA}
\newcommand{\eah}{\texorpdfstring{$(1+1)$~EA}{(1+1)~EA}}
\newcommand{\oll}{$(1+(\lambda,\lambda))$}
\newcommand{\ollga}{$(1+(\lambda,\lambda))$~GA}
\newcommand{\ollh}{\texorpdfstring{\oll}{(1+(lambda,lambda))}}
\newcommand{\ham}{\textsc{Ham}}
\newcommand{\arxivextra}[1]{#1}
\begin{document}
\title{The \ollh\ Genetic Algorithm for Permutations}

\author{Anton Bassin}
\affiliation{%
  \institution{ITMO University}
  \city{Saint Petersburg} 
  \state{Russia} 
  \postcode{197101}
}
\email{anton.bassin@gmail.com}

\author{Maxim Buzdalov}
\affiliation{%
  \institution{ITMO University}
  \city{Saint Petersburg} 
  \state{Russia} 
  \postcode{197101}
}
\email{mbuzdalov@gmail.com}

\begin{abstract}
The \oll~genetic algorithm is a bright example of an evolutionary algorithm which was developed based on the insights from theoretical findings.
This algorithm uses crossover, and it was shown to asymptotically outperform all mutation-based evolutionary algorithms even on simple problems like \textsc{OneMax}.
Subsequently it was studied on a number of other problems, but all of these were pseudo-Boolean.

We aim at improving this situation by proposing an adaptation of the \oll~genetic algorithm to permutation-based problems.
Such an adaptation is required, because permutations are noticeably different from bit strings in some key aspects, such as the number of possible mutations and their mutual dependence.
We also present the first runtime analysis of this algorithm on a permutation-based problem called \ham\ whose properties resemble those of \textsc{OneMax}.
On this problem, where the simple mutation-based algorithms have the running time of $\Theta(n^2 \log n)$ for problem size $n$,
the \oll~genetic algorithm finds the optimum in $O(n^2)$ fitness queries. We augment this analysis with experiments,
which show that this algorithm is also fast in practice.
\end{abstract}

\begin{CCSXML}
<ccs2012>
<concept>
<concept_id>10003752.10010070.10011796</concept_id>
<concept_desc>Theory of computation~Theory of randomized search heuristics</concept_desc>
<concept_significance>500</concept_significance>
</concept>
</ccs2012>
\end{CCSXML}
\ccsdesc[500]{Theory of computation~Theory of randomized search heuristics}
\keywords{Runtime analysis, \ollh~GA, permutations.}

\maketitle

\section{Introduction}

The \oll~genetic algorithm (GA), proposed in~\cite{learning-from-black-box-thcs}, is a fairly recent algorithm with very interesting properties.
It was the first general-purpose optimizer to outperform simple evolutionary algorithms, such as the $(1+1)$ evolutionary algorithm (EA),
on the simple benchmark problem \textsc{OneMax} asymptotically: compared to the required $\Theta(n \log n)$ fitness evaluations in expectation, typical to hill-climbers,
it needs only $o(n \sqrt{\log n})$ of them for an optimal fixed parameter setting,
and $O(n)$ when the simple $1/5$-th rule is used to control the parameter $\lambda$~\cite{doerr-doerr-lambda-lambda-parameters-journal}.

On a larger scale, the performance of the \ollga\ on \textsc{OneMax} is asymptotically better than of any unbiased black-box algorithm which uses only mutations. 
This can be seen as a positive answer to the question of whether the crossover, as a design pattern in evolutionary algorithms, is useful even for simple problems:
in artificial settings, it was known for quite a while~\cite{jansen-crossover}, which was then followed by some evidence on combinatorial problems~\cite{sudholt-ising-models},
while from the complexity perspectives the usefulness of higher-arity operators was shown already in~\cite{doerr-johannsen-faster-blackbox}.
The steady-state $(\mu+1)$~GA has also been shown to outperform mutation-only algorithms~\cite{corusO-standard-steady-state-ga-hillclimbs-faster,corusO-benefits-of-populations},
although not asymptotically, and~\cite{olivetoSW-lower-bound-standard-steady-state} clarified that non-trivial population sizes $\mu$ are crucial for that.
Similar statements for a diversity-preserving $(2+1)$~GA were proven in~\cite{sudholt-crossover-speeds-up-evco,pinto-doerr-crossover-ppsn18}.

The \ollga\ was also experimentally found to be competitive in application to satisfiability problems~\cite{goldman-punch-ppp},
which was subsequently confirmed theoretically~\cite{buzdalovD-gecco17-3cnf} based on some of the insights from the earlier works~\cite{doerr-neumann-sutton-oneplusone-cnf,SuttonN14}. 
This algorithm has also been analysed on another benchmark problem, \textsc{LeadingOnes}~\cite{AntipovDK19}.
However, its theoretical and practical applications were so far limited to the domain of pseudo-Boolean functions.
The main reason for this seems to be the structure of the \ollga, which makes it more efficient in certain circumstances,
but, in contrast to many other evolutionary algorithms, also complicates introducing changes, such as switching to a different domain.

We aim at changing this situation. We propose an adaptation of the \ollga\ to a different kind of the search space, the permutations.
First, this is one of the most popular representations that has very different characteristics compared to bit strings and to real-valued vectors,
and in particular, it allows much more structurally different choices to what a mutation and a crossover can be (cf.~\cite[Chapter 17]{back-evolutionary-computation-1}).
Second, permutations are the search space for one of the most studied NP-hard combinatorial optimization problem with direct practical applications, the traveling salesperson problem.
For this problem, crossover was shown to be very important by the field of gray-box optimization~\cite{sanches-whitley-tinos-tsp-px}, playing roughly the same role here as for
pseudo-Boolean functions~\cite{whitley-chicano-goldman,whitley-one-million-variables-nk-gecco17}.
Finally, there are plenty of practical applications that require to optimize very complicated functions on permutations, see e.g.~\cite{feoktistov-field-trials-cec2017},
for which the gray-box optimization cannot be an answer.

Permutations have been also considered by the theory of evolutionary algorithms, but to lesser extent compared to bit strings.
The seminal paper~\cite{scharnow-tinnefeld-wegener-EA-sorting} proves some important facts for various flavors of the \ea\ on the problem of sorting a permutation.
In this context, a number of different fitness function to evaluate sortedness were considered, as well as a number of distinct mutation operators.
The fitness level method~\cite{Wegener02EvOpt} has also been applied to sorting by inversions in~\cite{level-based-analysis-tevc}.
In~\cite{sorting-by-swaps-noisy}, the performance of hill-climbers was investigated in the presence of comparators with noise.
The fitness landscape of a permutation-based linear ordering problem was investigated in~\cite{ceberio-multiobjectivizing-cops}
with regards to the transformation of a single-objective problem into a multiobjective one.

The main points of our contribution are as follows:
\begin{itemize}
    \item we propose a modification of the \ollga\ suitable to solve problems defined on permutations (Section~\ref{sec:algorithm});
    \item we show that our design is sound by proving for the sorting problem, when using the exchange elementary mutation and the Hamming-distance fitness,
          that the \ollga, with a suitable fitness-dependent choice of $\lambda$, solves it in expected $O(n^2)$ fitness queries,
          faster than $\Theta(n^2 \log n)$ achieved by hill-climbers (Section~\ref{sec:theory});
    \item we augment the proof by the experimental investigation of the running time, which shows that our modification of the \ollga\ is quite efficient in practice as well,
          and the impact of the parameter values on the running time to showcase that the \ollga\ on permutations is a rather interesting subject for theoretical research (Section~\ref{sec:experiments}).
\end{itemize}

\arxivextra{
Our theoretically proven upper bounds use five different modes of the parameter setting for $\lambda$
depending on the fitness value. In our upper bounds, the extreme fitness values
and the narrow adjacent regions require a fixed parameter that depends on the problem size only;
fitness values close to $n/2$ are solved using a mode similar to local search;
finally, the two intermediate regions require a fitness-dependent parameter value.
Their relation with the experimental parameter landscape analysis is twofold:
the boundary effects can be clearly seen, including the visible shelf near to the
opposite of the global optimum, however, the optimal middle-fitness regime is very different from
a local optimizer mode. As a result, the \ollga\ with $\lambda = 2 \log(n+1)$ apparently solves the problem in $O(n^2)$,
which is faster than what our current bounds can prove.
}

We note that, although using fitness-dependent values of $\lambda$ is not what a black-box optimizer should do,
our theoretical bound estimates the possible performance of parameter control methods.

Finally we note that, apart from just performing an adaptation of the \ollga\ to some different shape of the search space, we try to summarize the first guidelines on how to do it for this algorithm in general.
More precisely, we think that anyone who would like to adapt the \ollga\ to their own search space would necessary have to perform the same steps as we have done, so our modifications can be used
in such cases as a guideline.

\section{Preliminaries}

\subsection{Notation}

We use $\log(x)$ for the natural logarithm of $x$.
We denote as $[a..b]$ a set of integers $\{a, a+1, \ldots, b\}$, and $[n]$ is a shortcut for $[1..n]$.
For a real-valued $x$, $\lceil x \rceil$ means $x$ rounded up to the nearest integer.
$\mathcal{B}(n, p)$ stands for the binomial distribution with the number of trials $n$ and success probability $p$.

\subsection{The \eah}

The \ea\ (Algorithm~\ref{algo:ea-orig}) applies, on every iteration,
the standard bit mutation to the parent individual $x$, which is to choose a number $\ell$
from a binomial distribution and to flip $\ell$ randomly chosen bits in $x$. If the resulting offspring $y$ is at least as good as its parent $x$,
it replaces the parent. The default mutation probability is $1/n$, so that the expected value of $\ell$ is exactly one.

In the case when the number of possible mutations is large, and it is computationally infeasible to simulate the binomial distribution by definition,
many researchers approximate it with the Poisson distribution: first, this distribution is sampled for the number of mutations to apply,
and then this number of mutations is sampled without replacement. The expected number of mutations is still constant, which is quite important
both from theoretical and practical viewpoint because local moves are performed sufficiently often.

\begin{algorithm}[!t]
\caption{The \ea\ with standard bit mutation}\label{algo:ea-orig}
\begin{algorithmic}[1]
\State{$x \gets \text{uniformly from }\{0, 1\}^n$}
\For{$t \gets 1, 2, 3, \ldots$}
    \State{$\ell \sim \mathcal{B}(n, 1/n)$}
    \State{$y \gets \text{flip } \ell \text{ uniformly chosen bits in } x$} \Comment{Mutation}
    \If{$f(y) \ge f(x)$} \Comment{Selection}
        \State{$x \gets y$}
    \EndIf
\EndFor
\end{algorithmic}
\end{algorithm}

\subsection{The \ollh~GA}

\begin{algorithm}[!t]
\caption{The \ollga\ for bit strings}\label{algo:ll-orig}
\begin{algorithmic}[1]
\State{$n \gets$ the problem size}
\State{$\lambda \gets$ the population size parameter}
\State{$x \gets \text{uniformly from }\{0, 1\}^n$}
\For{$t \gets 1, 2, 3, \ldots$}
    \State{$p \gets \lambda / n$, $c \gets 1 / \lambda$, $\lambda' \gets \lceil\lambda\rceil$, $\ell \sim \mathcal{B}(n, p)$}
    \For{$i \in [\lambda']$} \Comment{Phase 1: Mutation}
        \State{$x^{(i)} \gets \text{flip } \ell \text{ uniformly chosen bits in } x$}
    \EndFor
    \State{$x' \gets \text{uniformly from } \{x^{(j)} \mid f(x^{(j)}) = \max\{f(x^{(i)})\}\}$}
    \For{$i \in [\lambda']$} \Comment{Phase 2: Crossover}
        \For{$j \in [n]$}
            \State{$y^{(i)}_j \gets x'_j$ with probability $c$, otherwise $x_j$}
        \EndFor
    \EndFor
    \State{$y \gets \text{uniformly from } \{y^{(j)} \mid f(y^{(j)}) = \max\{f(y^{(i)})\}\}$}
    \State{Optionally adjust $\lambda$ based on $f(x)$ and $f(y)$} \Comment{Adaptation}
    \If{$f(y) \ge f(x)$} \Comment{Selection}
        \State{$x \gets y$}
    \EndIf
\EndFor
\end{algorithmic}
\end{algorithm}

Now we describe the \ollga\ for bit strings as proposed in~\cite{learning-from-black-box-thcs}. The algorithm is outlined as Algorithm~\ref{algo:ll-orig}.
In short, this algorithm does the following on each iteration:
\begin{itemize}
    \item during the first phase of each iteration, it creates a intermediate population of size $\lambda' = \lceil\lambda\rceil$ using a higher-than-usual mutation rate of $\lambda / n$;
    \item the offspring $x^{(i)}$ typically have much worse fitness than their parent $x$, however, the best first-phase offspring $x'$ has better chances to contain the new improvements,
          so it is selected for the second phase to undergo crossover with $x$;
    \item in the second phase, the crossover is performed $\lambda'$ times that takes the bits from $x'$ only with probability of $1 / \lambda$,
          so that the outcome of the crossover has one bit different from the parent in expectation;
    \item the best second-phase offspring $y$ competes directly with the parent $x$ as in the \ea, while also optionally adjusting the value of $\lambda$.
\end{itemize}

With an appropriate choice of $\lambda$, the \ollga\ is able to test $\Theta(\lambda^2)$ bits with only $\Theta(\lambda)$ fitness queries,
and the moderate deviations from the optimal choice still retain good performance. For \textsc{OneMax}, the optimal value of $\lambda$,
depending on the fitness $f$, is $\lambda = \sqrt{n / (n - f)}$. If $\lambda$ can depend only on $n$, \cite{doerr-doerr-lambda-lambda-parameters-journal} shows
that the optimal setting for $\lambda$ and the corresponding running time are:
\begin{align*}
\lambda &= \sqrt{\frac{\log n}{\log \log n}},  &T = \Theta\left(n \sqrt{\frac{\log n \log \log \log n}{\log \log n}}\right).
\end{align*}
On the other hand, the simple $1/5$-th rule can be applied to adjust $\lambda$ on-the-fly: if the parent is replaced by an individual with better fitness,
$\lambda$ is divided by a constant $F \in (1;2)$, otherwise it is multiplied by $F^{1/4}$.
The \ollga\ with this rule achieves the provable $O(n)$ time on \textsc{OneMax}, which is strictly better than what is possible with the fixed $\lambda$.

\subsection{Mutations for permutations}

For the sake of self-containedness, we present a list of the most common operations used to introduce small changes to permutations, taken from~\cite{scharnow-tinnefeld-wegener-EA-sorting}.
Just like that paper, we limit ourselves to minimal local changes, which we call \emph{elementary mutations} from now on. When an algorithm needs a global mutation, it can either
use the binomial distribution or Poisson distribution to sample the number of mutations to apply in order.
Below, we list these elementary mutations together with the number of possible mutations for the problem size $n$.
\begin{itemize}
    \item The \emph{exchange mutation}: exchange the elements at two different indices $i$ and $j$. There are $\frac{n(n-1)}{2}$ different mutations.
    \item The \emph{reverse mutation}: reverse a segment of the permutation between indices $i$ and $j$. Regarding the traveling salesperson problem, this mutation is an equivalent of the 2-OPT move.
          Depending on which reversals make sense, there may be up to $n(n-1)$ different mutations, but in any case this number will be $\Theta(n^2)$.
    \item The \emph{jump mutation}: move an element at index $i$ to index $j$ while the elements at the intermediate indices shift to the corresponding direction. There are $n(n-1)$ different mutations.
\end{itemize}

Note that, unlike the common mutations for bit strings, most of the presented mutations can be not commutative,
that is, when having two elementary mutations $m_1$ and $m_2$, it makes a difference whether one applies $m_1$ and then $m_2$, or $m_2$ and then $m_1$.
Among other things, this poses a difficulty for the \ollga, because it is now much less trivial to identify a good mutation if we suspect there is one.

\subsection{Problems \textsc{OneMax} and \ham}

We often refer to the \textsc{OneMax} problem, which is defined on bit strings of length $n$ as a maximization problem as follows:
\begin{equation*}\textsc{OneMax}(x) \mapsto \sum_{i=1}^n [x_i = 1].\end{equation*}

For benchmarking our modification of the \ollga\ we use a permutation-based problem which was called \ham\ in~\cite{scharnow-tinnefeld-wegener-EA-sorting}.
This problem can be formally defined as follows:
\begin{equation*}
\ham(\pi) = \sum_{i=1}^n [\pi_i = i].
\end{equation*}
where $n$ is the problem size. Informally, it counts the number of positions at which the queried permutation coincides with the identity (sorted) permutation.
As our algorithms are unbiased as in~\cite{generic-unbiased-algorithms},
our analysis covers the whole class of similar problems, namely, $\ham_{p}(\pi) = \sum_{i=1}^{n} [\pi_i = p_i]$,
where $p$ is a ``guessed'' permutation.

In this paper, we use the \emph{exchange mutations} as the only considered elementary mutations. These make a good starting point, because from the point of the \ham\ problem
these elementary mutations are the most local operations possible. A number of considerations in this paper would also make sense for different elementary mutations, and of course for other problems,
however, the theoretical analysis and experimental investigations are performed solely for \ham\ and exchange mutations.

We shall now shortly reconsider the basic properties of elementary exchange mutations, which are detailed in~\cite{scharnow-tinnefeld-wegener-EA-sorting}.
The number of elementary exchange mutations is $\binom{n}{2} = \frac{n(n-1)}{2}$.
Assuming that the current Hamming distance to the optimum is $d$,
these elementary mutations can be classified as follows, assuming we evaluate the effect of each mutation independently of others.
\begin{itemize}
    \item Mutations which exchange two positions that were guessed right. There are $\binom{n-d}{2} = \frac{(n-d)(n-d-1)}{2}$ mutations of this sort,
          and each of them increases the Hamming distance by 2, because none of the new positions become guessed right.
    \item Mutations which exchange a position that was guessed right and a position that was guessed wrong. There are $(n-d)d$ mutations of this sort,
          and each of them increases the Hamming distance by 1, because none of the new positions become guessed right.
    \item Mutations which exchange two positions that were guessed wrong. There are $\binom{d}{2} = \frac{d(d-1)}{2}$ mutations of this sort.
          Depending on what happens, the following effects apply:
          \begin{itemize}
              \item both new positions are still guessed wrong: this does not change the Hamming distance;
              \item one of the new positions becomes guessed right: this decreases the Hamming distance by 1;
              \item both new positions becomes guessed right: this decreases the Hamming distance by 2.
          \end{itemize}
\end{itemize}

It is clear that for each position that is guessed wrong there is exactly one mutation that makes it right.
However, if a mutation decreases the Hamming distance by 2, it is counted twice. If there are $x$ such mutations,
there are $d - 2x$ mutations that decrease the Hamming distance by 1. Hence there are at least $\lceil\frac{d}{2}\rceil$
and at most $d$ elementary mutations, $\Theta(d)$ in total, that decrease the Hamming distance.
The expected distance decrease, assuming every elementary mutation is chosen with equal probability,
is exactly $\frac{2d}{n(n-1)}$.

The latter consideration allows to easily re-prove the results from~\cite[Theorem 3]{scharnow-tinnefeld-wegener-EA-sorting}
using suitable modern tools, such as multiplicative drift theorems~\cite{multiplicative-drift-theorem,koetzingK-first-hitting-times-thcs19} for upper and lower bounds.

The effect of applying several elementary exchange mutations may not cumulate if these mutations modify a certain position more than once.
This effect can result in deviations of both signs, as will be shown in the following examples.
For convenience we denote a mutation that exchanges positions $i$ and $j$ by $\langle i, j \rangle$.
In the few examples below, we work with permutations of size 3 and assume that the optimum is $[1, 2, 3]$.
\begin{itemize}
\item Consider a permutation $[2, 1, 3]$. A mutation $\langle 1, 2\rangle$ decreases the Hamming distance by 2, and a mutation $\langle 1, 3 \rangle$ increases it by 1.
      However, applying $\langle 1, 2\rangle$ and then $\langle 1, 3\rangle$ together retains the Hamming distance unchanged (and does not decrease it by 1).
      Applying them in a different order results in a different permutation, however, the Hamming distance is not decreased by 1 as well.
\item Consider a permutation $[2, 3, 1]$. A mutation $\langle 1, 3\rangle$ decreases the Hamming distance by 1, and a mutation $\langle 2, 3 \rangle$ does not change it.
      However, applying $\langle 1, 3\rangle$ and then $\langle 2, 3 \rangle$ decreases the Hamming distance by 3 (not by 1),
      and applying them in a different order increases it by 1.
\end{itemize}

As a result, we see that whenever we have a number of offspring and elementary mutations overlap in some of them,
it is difficult for an algorithm with a structure similar to the \ollga\ to reliably determine which of these offspring contains a promising mutation.

\section{The \ollh~GA for Permutations}\label{sec:algorithm}

In this section, we present our modification of the \ollga\ suitable for solving optimization problems on permutations
and discuss the corresponding design choices and their consequences. The pseudocode of the algorithm is outlined in
Algorithm~\ref{algo:ll-perm} with the differences highlighted in \diffcolor{blue}.

The key differences between this modification and the original \ollga\ for bit strings are summarized below.
\begin{enumerate}
    \item Most parameters of the algorithm now depend on the number of possible mutations $m$ (for permutations, $m = \Theta(n^2)$) rather than the problem size $n$.
          In fact, it is just a pure coincidence that the number of mutations and the problem size coincide for the most investigated
          problems on bit strings. Technically, there may exist problems which would benefit from being able to apply more than $n$
          elementary mutations at once; with the default choice, this would not be possible.
    \item The order of elementary mutations matters. This may introduce an additional implementation detail in the algorithm.
          For example, there exist certain ways applicable to bit strings that allow generation of the bit flip indices without
          sampling the (pseudo)random number generator for every bit, which are based on a quite nice distribution of the distance
          between the successive indices~\cite{jansenZ-algorithm-engineering}.
          If the elementary mutations are encoded as integer numbers, and such a method is applied,
          it would generate the indices in an increasing order, which would severely alter the distribution of elementary mutation lists.
          For this reason, an explicit shuffle of elementary mutations may be a good recommendation.
    \item The lists of elementary mutations are stored along the mutants and are, in fact, their complete synonyms. If a problem in hand
          allows incremental fitness re-evaluation, one is no longer required to store the entire mutant in a separate memory:
          a difference is enough, and its size would typically be much smaller than the size of the entire individual.
    \item Crossover is now mutation subsampling: it is performed by taking the ordered list of elementary mutations
          that describes a mutant, picking each elementary mutation with probability $c$ and applying them again to the parent.

          In fact, the original \ollga\ can also be seen this way, and efficient implementations
          which enabled experimenting with problem sizes up to $2^{25}$ in~\cite{buzdalovD-gecco17-3cnf} already use this concept internally.
          For permutations, this also allows efficient implementation if incremental fitness re-evaluation is possible.
          However, it can also be that certain permutation-specific techniques, such as operating on cycles rather than positions,
          can be beneficial, which we leave for the future work.
    \item The order of elementary mutations used in the mutant is preserved in crossover when more than one elementary mutation is chosen.
          This is done in order to reduce the chances of misguiding the fitness-based reproduction selection in the \ollga\ and hence to improve the performance:
          if two non-commuting mutations make a big improvement together, the order-preserving crossover has the bigger probability to take them together.
\end{enumerate}

\begin{algorithm}[!t]
\caption{The \ollga\ for permutations}\label{algo:ll-perm}
\begin{algorithmic}[1]
\State{$n \gets$ the problem size}
\State{$\lambda \gets$ the population size parameter}
\State{\diffcolor{$M \gets$ the set of possible mutations for problem size $n$}}
\State{\diffcolor{$m \gets |M|$}}
\State{$x \gets \text{uniformly from }\diffcolor{\Pi_n}$}
\For{$t \gets 1, 2, 3, \ldots$}
    \State{$p \gets \lambda / \diffcolor{m}$, $c \gets 1 / \lambda$, $\lambda' \gets \lceil\lambda\rceil$, $\ell \sim \mathcal{B}(\diffcolor{m}, p)$}
    \For{$i \in [\lambda']$} \Comment{Phase 1: Mutation}
        \State{\diffcolor{$M^{(i)} \gets \text{sample } \ell \text{ uniformly chosen mutations from } M$}}
        \State{\diffcolor{Shuffle $M^{(i)}$}}
        \State{\diffcolor{$x^{(i)} \gets x \text{ with mutations from } M^{(i)} \text{ applied in order}$}}
    \EndFor
    \State{$x' \gets \text{uniformly from } \{x^{(j)} \mid f(x^{(j)}) = \max\{f(x^{(i)})\}\}$}
    \State{\diffcolor{$M' \gets \text{the corresponding list of mutations}$}}
    \For{$i \in [\lambda']$} \Comment{Phase 2: Crossover}
        \State{\diffcolor{$s \sim \mathcal{B}(\ell, c)$}}
        \State{\diffcolor{$C' \gets s \text{ random elements of } M' \text{ with order preserved}$}}
        \State{\diffcolor{$y^{(i)} \gets x \text{ with mutations from } C' \text{ applied in order}$}}
    \EndFor
    \State{$y \gets \text{uniformly from } \{y^{(j)} \mid f(y^{(j)}) = \max\{f(y^{(i)})\}\}$}
    \State{Optionally adjust $\lambda$ based on $f(x)$ and $f(y)$} \Comment{Adaptation}
    \If{$f(y) \ge f(x)$} \Comment{Selection}
        \State{$x \gets y$}
    \EndIf
\EndFor
\end{algorithmic}
\end{algorithm}

Note that the proposed changes are, in fact, independent from the particular set of mutations and even from the permutation representation itself.
Similar to other evolutionary algorithms, and contrary to the impression that the structure of the \ollga\ is overfitted to bit strings,
the proposed modification may be used for almost arbitrary problem representation.

We are optimistic that our modification can be
used with a large variety of representations and a large variety of possible mutation sets for every such representation.
This includes problems over several permutations, such as various scheduling problems,
or problems defined on binary strings together with mutations that preserve the number of chosen bits, which appear in practice.

\newpage
\section{Running Time Analysis}\label{sec:theory}

In this section, we are going to analyze the \ollga\ using exchange mutations on the \ham\ problem.
We call an elementary mutation \emph{good}
if it improves the fitness when applied to the parent,
and all other elementary mutations we consider to be \emph{bad}.
As long as our generalized modification of the \ollga\ explicitly operates with the lists of elementary mutations,
we do not distinguish a mutant and a list of elementary mutations that generated it.

It is crucial for the \ollga\ to reliably distinguish, by using fitness values only,
the mutants that were constructed using least one good elementary mutation (the \emph{good} mutants)
from those mutants which were constructed using only bad elementary mutations (the \emph{bad} mutants).
The original \ollga\ has no problems with that on \textsc{OneMax}: if the parent's fitness is $f$
and $\ell$ bits are flipped in each mutant, all bad mutants have fitness of $f - \ell$ and all good mutants
have fitness of at least $f - \ell + 2$. For less trivial problems telling good and bad mutants apart
becomes more difficult. This happens, for instance, on MAX-SAT problems~\cite{goldman-punch-ppp,buzdalovD-gecco17-3cnf}
and on linear functions~\cite{bassinB-gecco19-onell-adaptation}, and requires much more involved proofs.

The proof from~\cite{buzdalovD-gecco17-3cnf} uses the following technique. For each iteration of the \ollga\ it introduces
an artificial fitness threshold $\tau$ and pessimistically considers an iteration to be successful only if
the fitness of every good mutant is strictly greater than $\tau$ and the fitness of every bad mutant is strictly less than $\tau$.
This appeared to be easier than using more fine-grained approaches, however, the precision of the resulting bounds is not known
and may be imperfect. Note, however, that the choice of $\tau$ may influence the degree of pessimism,
and whenever the probability of success $p = p(\tau)$ depends on $\tau$, the true success probability is at least the supremum
of $p(\tau)$ across all possible $\tau$.

We use the same idea in our analysis in an even more restricted and simplified form. Recall that every elementary mutation
results in a fitness change belonging to $\{-2,-1,0,+1,+2\}$. We introduce a threshold $\tau \in \{-2,-1,0\}$ and consider
the following definition.

\begin{definition}\label{ordinary-def}
An iteration is \emph{good} with respect to threshold $\tau$ if:
\begin{itemize}
    \item there exists exactly one mutant with at least one good elementary mutation, this mutation uses positions $i_1$ and $i_2$,
          all other elementary mutations use neither of these positions, and they increase the fitness by at least $\tau$ as they are applied;
    \item in all other mutants, all elementary mutations increase the fitness by at most $\tau$ as they are applied.
\end{itemize}
\end{definition}

If an iteration satisfies this definition, the good mutant gets a fitness advantage over all other mutants, thus it is selected for reproduction.
Subsequently, with a constant probability, one of its good elementary mutations is directly applied on the parent.
Note that the ``as they are applied'' clarifications used in Definition~\ref{ordinary-def} make us consider the effect of other mutations in the current context.
In particular, they allow bad elementary mutations to present in the mutants of the first type,
assuming they do not reduce the fitness too much, and even good elementary mutations to present in the mutants of the second type
if their goodness is masked by some other elementary mutations.

Now we consider all $\tau \in \{-2,-1,0\}$ and bound the probabilities that an iteration is good with respect to threshold $\tau$.
We will often use the well-known fact that $(1-1/x)^{x-1} \ge e^{-1}$ for all $x \ge 1$.

\begin{lemma}\label{lemma-t0}
For \ham\ with a problem size $n$, an iteration of the \ollga\ with the parent fitness $f$
and $\lambda$ mutants created by applying $\ell$ elementary mutations is good with respect to threshold $\tau=0$
with probability at least
\begin{equation*}
\frac{\lambda\ell}{n} \cdot e^{-\frac{2\ell(\lambda-1)}{n-3} - \frac{2\ell \min\{n, f + 2\ell + 1\}}{n - \min\{n, f + 2\ell + 1\}}.}
\end{equation*}
\end{lemma}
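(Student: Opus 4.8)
The plan is to decompose the "good iteration" event into independent pieces we can bound separately, then multiply the bounds. Fix the threshold $\tau = 0$. An iteration is good with respect to $\tau=0$ precisely when (a) exactly one mutant contains a good elementary mutation at some pair of positions $\{i_1,i_2\}$, (b) within that distinguished mutant, none of the remaining $\ell-1$ mutations touches $i_1$ or $i_2$, and those remaining mutations, applied in order, never decrease the fitness; and (c) in each of the other $\lambda-1$ mutants, every one of the $\ell$ mutations, applied in order, never increases the fitness. I would first bound the probability that a single fixed mutant is "distinguished-good" in the sense of (a)+(b), call it $p_1$; then bound the probability that a single fixed mutant is "all-nonincreasing" in the sense of (c), call it $p_2$; then combine: the probability that mutant $i$ is the unique distinguished-good one and the rest are all-nonincreasing is $p_1 p_2^{\lambda-1}$, and summing over the $\lambda$ choices of which mutant is distinguished gives $\lambda p_1 p_2^{\lambda-1}$. (Strictly, the events across mutants are independent since each mutant is sampled independently, so the product is exact for a fixed assignment of roles.)

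For $p_1$: by the analysis of exchange mutations recalled before the lemma, with current Hamming distance $d = n - f$ there are between $\lceil d/2\rceil$ and $d$ good elementary mutations; it suffices to lower-bound by the "cheap" estimate that at least $d$ positions' worth of improving moves exist, so the chance the first sampled mutation in a mutant is good is at least roughly $d \big/ \binom{n}{2}$ — but I expect the cleaner route is to ask that the mutant contains at least one good mutation at a specific useful pair, use the bound $\ell \cdot (\text{prob a given slot is good})$, and note $\binom{n}{2} \le n^2/2$, yielding the leading factor $\lambda\ell/n$ after also accounting for the $\lambda$ from summing over mutants — so really $p_1 \gtrsim \ell/n$. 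Then I must pay for (b): conditioned on the good mutation occupying $\{i_1,i_2\}$, each of the other $\ell - 1$ mutations avoids both of these positions with probability at least $1 - \tfrac{2(n-2)}{\binom{n}{2}} \ge 1 - \tfrac{4}{n}$-ish, and more carefully $\ge \big(1 - \tfrac{2}{n-3}\big)$ per mutation (this is where the $n-3$ in the exponent comes from); using $(1-1/x)^{x-1}\ge e^{-1}$ over the $\ell-1$ (really $\le \ell(\lambda-1)$ total across the counting) such mutations gives the factor $e^{-2\ell(\lambda-1)/(n-3)}$. I also need that these non-touching mutations do not decrease the fitness as applied — but a mutation that avoids $i_1,i_2$ and acts on positions already correct would decrease fitness; this is absorbed into the same avoidance-type estimate or into the second exponential via a worst-case count of "dangerous" positions.

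For $p_2$: in a non-distinguished mutant I need every one of the $\ell$ mutations, in sequence, to be non-fitness-increasing. A mutation increases the fitness only if at least one of its two positions is currently wrong and gets fixed; after $j$ mutations have been applied the distance can only have grown (since we are in the non-increasing regime) but could also have shrunk in the forbidden way — to be safe I bound the number of currently-wrong positions at any intermediate step by $\min\{n,\, f + 2\ell + 1\}$ wrong... wait, rather the number of currently-*right* positions is at least $n - \min\{n, d + 2\ell\}$ and the number of wrong is at most $\min\{n, f + 2\ell + 1\}$; hmm — in any case the count of "positions a mutation could fix" is at most $\min\{n, f+2\ell+1\}$, so the probability a given mutation touches none of them is at least $1 - \tfrac{2\min\{n,f+2\ell+1\}}{n - \min\{n,f+2\ell+1\}}$ after the elementary inequality $\binom{n}{2}\big/(\text{stuff})$ is rearranged, and raising to the $\ell$-th power (times the $\lambda-1$ mutants in the product) produces $e^{-2\ell\min\{n,f+2\ell+1\}/(n-\min\{n,f+2\ell+1\})}$ via the same $e^{-1}$ trick.

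Putting $p_1 p_2^{\lambda-1}$ together with the sum over $\lambda$ mutants gives exactly the claimed bound $\tfrac{\lambda\ell}{n}\, e^{-2\ell(\lambda-1)/(n-3) - 2\ell\min\{n,f+2\ell+1\}/(n-\min\{n,f+2\ell+1\})}$. The main obstacle I anticipate is the bookkeeping around the "as they are applied" clause: because exchange mutations do not commute and can interfere, I cannot reason mutation-by-mutation with fixed position sets; I must set up a union bound / conditioning scheme in which, at each step in the ordered application, the relevant set of "touchable" positions is controlled by a deterministic worst-case bound ($\le f + 2\ell + 1$, etc.), so that each step's avoidance probability is uniformly bounded below regardless of the random history. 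Getting that monotone worst-case bound to hold along every realization — so the per-step probabilities genuinely multiply — is the delicate part; everything after that is the routine $(1-1/x)^{x-1}\ge e^{-1}$ manipulation.
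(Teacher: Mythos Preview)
Your high-level decomposition matches the paper's exactly --- bound the probability $p^{+}$ that a fixed mutant satisfies the first clause, the probability $p^{-}$ that a fixed mutant satisfies the second clause, and combine as $\lambda\, p^{+} (p^{-})^{\lambda-1}$ --- but you have swapped which exponential factor comes from which piece, and with the roles reversed your sketched derivations do not actually produce the stated bounds. The factor $e^{-2\ell(\lambda-1)/(n-3)}$ is $(p^{-})^{\lambda-1}$: for each of the $\lambda-1$ \emph{bad} mutants, all $\ell$ mutations must be non-increasing; along such a path the fitness can only drop, so after $j$ steps it is at least $f-2j$ and the number of improving elementary mutations is at most $n-(f-2j)\le n$, giving a per-step non-increasing probability of at least $1-\tfrac{2}{n-1}$, hence $p^{-}\ge e^{-2\ell/(n-3)}$ and then the exponent $\lambda-1$. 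There is no mechanism by which $\lambda-1$ could enter the single good mutant's avoidance probability, so your ``really $\le\ell(\lambda-1)$ total'' aside is exactly where the argument leaves the rails.

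Conversely, the $f$-dependent factor $e^{-2\ell(f+2\ell+1)/(n-(f+2\ell+1))}$ belongs to $p^{+}$. In the good mutant, the other $\ell-1$ mutations must simultaneously avoid the two reserved positions \emph{and} not decrease the fitness; together this forces them to touch only wrong positions among the remaining $n-2$, of which there are $n-f-2$ initially and at least $n-(f+2j+2)$ after $j$ non-decreasing steps (fitness goes \emph{up} here). The product of the resulting $\tfrac{(n-(f+2j+2))(n-(f+2j+3))}{n(n-1)}$ terms, together with the $\ge \ell/n$ contribution from placing and choosing the good mutation, yields $p^{+}\ge\tfrac{\ell}{n}\,e^{-2\ell(f+2\ell+1)/(n-f-2\ell-1)}$. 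Your attempt to obtain this factor from $p_2^{\lambda-1}$ cannot succeed, since $\lambda$ would then have to appear in that exponent; and your worst-case count ``wrong positions $\le f+2\ell+1$'' runs in the wrong direction --- in the non-increasing (bad-mutant) regime the number of wrong positions grows, while $f+2\ell+1$ is an upper bound on the \emph{fitness} along the non-decreasing (good-mutant) path.
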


\begin{proof}
To ease the notation, we always assume that the fitness is always within $[0;n]$ and omit the $\min$ and $\max$ clauses in the corresponding locations.

We first estimate the probability of an individual to belong to the second clause of Definition~\ref{ordinary-def}.
The number of good elementary mutations is at most $n - f$, so the probability of not increasing fitness by applying a single elementary mutation
to an individual with fitness $f$ is
\begin{align*}
    p^{-}_{1} \ge 1 - \frac{2(n - f)}{n(n-1)},
\end{align*}
so, taking into account that each such mutation decreases the fitness by 0, 1 or 2, the probability of not increasing fitness on each of $\ell$ applications
of single elementary mutations is, assuming $n \ge 4$:
\begin{align}
    p^{-}_{\ell} &\ge \prod_{i=0}^{\ell - 1} \left(1 - \frac{2(n - (f - 2i))}{n(n-1)}\right) \ge \left(1 - \frac{2(n - (f - 2\ell))}{n(n-1)}\right)^{\ell} \nonumber\\
                 &\ge \left(1 - \frac{2}{n-1}\right)^{\ell} \ge e^{-\frac{2\ell}{n-3}.} \label{probability-0-negative}
\end{align}

We proceed with estimating the probability of an individual to belong to the first clause of Definition~\ref{ordinary-def}.
The number of good elementary mutations is at least $\frac{n-f}{2}$, and the number of elementary mutations which do not decrease fitness
and do not touch the positions affected by a given good elementary mutation is exactly $(n-(f+2))(n-(f+3))/2$.
Taking into account that each such mutation can only increase the fitness by 0, 1 or 2, we get the following bound for the probability of generating such an individual:
\begin{align}
    p^{+}_{\ell} &\ge \sum_{i=0}^{\ell - 1} \left(\frac{n-(f+2i)}{n(n-1)} \cdot \prod_{j=0}^{\ell-2} \frac{(n-(f+2j+2))(n-(f+2j+3))}{n(n-1)}\right) \nonumber\\
                 &\ge \sum_{i=0}^{\ell - 1} \frac{(n-(f+2\ell-1))^{2\ell-1} }{n^{\ell} (n-1)^{\ell}}
                  = \frac{\ell \cdot (n-(f+2\ell+1))^{2\ell-1}}{n^{\ell} (n-1)^{\ell}} \nonumber\\
                 &\ge \frac{\ell}{n} \left(1 - \frac{f+2\ell+1}{n}\right)^{2\ell} \ge \frac{\ell}{n} \cdot e^{-\frac{2\ell (f + 2\ell + 1)}{n - f - 2\ell - 1}.} \label{probability-0-positive}
\end{align}

Now we get together \eqref{probability-0-negative} and \eqref{probability-0-positive}, remembering that there is exactly one first-clause individual
(which can appear as any of the $\lambda$ individuals) and $\lambda-1$ second-clause individuals.
\begin{align*}
    p_{\tau=0} &= \lambda \cdot (p^{-}_{\ell})^{\lambda - 1} \cdot p^{+}_{\ell} \ge \frac{\lambda\ell}{n} \cdot e^{-\frac{2\ell(\lambda-1)}{n-3} - \frac{2\ell (f + 2\ell + 1)}{n - f - 2\ell - 1}.} \qedhere
\end{align*}
\end{proof}

Lemma~\ref{lemma-t0} essentially tells that, whenever $f = O(\sqrt{n})$,
it is sufficient to maintain $\lambda, \ell = \Theta(\sqrt{n})$ in order to have constant progress.
This is indeed quite natural in the very beginning of the optimization process:
when the fitness is small, most elementary mutations are applied to the positions which are not guessed right.

However, if $f = \Theta(n)$, one needs at most polylogarithmic values of $\lambda$ and $\ell$
to achieve inverse polynomial probabilities of being good with respect to $\tau=0$, hence $\tau<0$ is necessary to consider.

\begin{lemma}\label{lemma-t1}
For \ham\ with a problem size $n$, an iteration of the \ollga\ with the parent fitness $f \ge 3$
and $\lambda$ mutants created by applying $\ell$ elementary mutations is good with respect to threshold $\tau=-1$
with probability at least:
\begin{align*}
\frac{\lambda\ell}{n+f-3} &\cdot \left(\frac{f+1}{n-1}\left(2 - \frac{f}{n}\right) - \frac{4\ell}{n-1}\right)^{(\lambda-1)\ell} \\
                          &\cdot \left(1 - \frac{2\ell (n+f-3) + (f-2)(f-3)}{n(n-1)}\right)^{\ell}_{.}
\end{align*}
\end{lemma}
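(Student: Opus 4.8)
The plan is to reuse, almost line for line, the structure of the proof of Lemma~\ref{lemma-t0}, only with the threshold $\tau=0$ replaced by $\tau=-1$ and with the relevant families of elementary mutations re-derived from the earlier classification of exchange mutations by their effect on the fitness. For the ``second clause'' of Definition~\ref{ordinary-def} I now need, for a single elementary exchange mutation, the probability that it \emph{strictly decreases} the fitness; these are exactly the $\binom f2$ swaps of two already-correct positions together with the $f(n-f)$ swaps of a correct and an incorrect position, i.e.\ $\frac{f(2n-f-1)}2$ out of the $\binom n2$ possibilities, so one step succeeds with probability at least $\frac{f(2n-f-1)}{n(n-1)}$. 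Inside one ``second clause'' mutant the fitness drifts downward by at most $2$ per applied mutation, so this quantity is evaluated at a moving fitness value; since $g\mapsto g(2n-g-1)$ is increasing on $[0,n-1]$, I will lower-bound all $\ell$ steps by the value at the lowest reachable fitness, namely $f-2(\ell-1)$, and a short computation then shows this uniform per-step bound is at least $\frac{f+1}{n-1}\bigl(2-\frac fn\bigr)-\frac{4\ell}{n-1}$, where the additive $\frac{4\ell}{n-1}$ is precisely the slack that swallows the drift. Raising to the power $\ell$ handles one such mutant and to the power $(\lambda-1)\ell$ all $\lambda-1$ of them.

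For the single ``first clause'' mutant I will union-bound over which of the $\ell$ slots carries the good elementary mutation: there are at least $\frac{n-f}2$ good mutations, so a given slot is good with probability at least $\frac{n-f}{n(n-1)}$. Conditioned on this, the remaining $\ell-1$ mutations must avoid the two positions $i_1,i_2$ used by the good one and must decrease the fitness by at most one; pessimistically assuming the good mutation has already corrected both positions, the admissible mutations among the other $n-2$ positions are the $f(n-f-2)$ correct-with-incorrect swaps plus the $\binom{n-f-2}2$ incorrect-with-incorrect ones, which telescopes neatly to $\frac{(n-f-2)(n+f-3)}2$, i.e.\ probability $\frac{(n-f-2)(n+f-3)}{n(n-1)}=1-\frac{(f-2)(f-3)+4(n+f-3)}{n(n-1)}$ per step. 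Absorbing the $O(\ell)$ fitness excursion, together with the ``both positions corrected'' pessimism, into this expression replaces $4(n+f-3)$ by $2\ell(n+f-3)$ and yields the second displayed base. Finally I rewrite the leading factor using $\frac{n-f}{n(n-1)}\ge\frac{n-f-2}{n(n-1)}=\frac1{n+f-3}\cdot\frac{(n-f-2)(n+f-3)}{n(n-1)}$, which trades the $(n-f)$ numerator for one more copy of the per-step base and exposes the prefactor $\frac\ell{n+f-3}$; multiplying by $\lambda$ for the choice of which mutant is the designated good one and collecting the three contributions reproduces the claimed bound. (I expect the $\ell=1$ case to come out with equality, a convenient sanity check.)

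The genuinely delicate part is the ``as they are applied'' bookkeeping: one must verify that, over all orderings of the elementary mutations inside a mutant and over every position the good mutation may occupy in the sequence of the first-clause mutant, each per-step success probability really is at least the stated base. The mechanism making this work is the monotonicity of the relevant mutation-type counts in the current fitness, together with the deliberately generous additive slacks $\frac{4\ell}{n-1}$ and $\frac{2\ell(n+f-3)}{n(n-1)}$, which dominate the worst-case cumulative fitness excursion over $\ell$ steps. One also needs $f$ not too small relative to $\ell$ so that the intermediate quantities $n-f-2$, $(f-2)(f-3)$ and $f-2(\ell-1)$ stay non-negative where the bound is non-trivial; this is harmless in the regime of interest ($f=\Theta(n)$ with polylogarithmic $\ell$), is consistent with the hypothesis $f\ge3$ and the automatic bound $f\le n-2$ for permutations with $f<n$, and is otherwise absorbed by the min/max conventions adopted at the start of the proof of Lemma~\ref{lemma-t0}.
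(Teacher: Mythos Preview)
Your proposal follows the paper's proof essentially line for line: the same decomposition into $p^{-}_{\ell}$ (second-clause mutants) and $p^{+}_{\ell}$ (first-clause mutant), the same mutation-type counts, the same drift bookkeeping via a uniform worst-case fitness, and the same algebraic trick of absorbing the good-mutation factor into an $\ell$-th power of the base to expose the prefactor $\frac{\ell}{n+f-3}$.

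There is one small imprecision worth flagging. For the good slot you write ``a given slot is good with probability at least $\frac{n-f}{n(n-1)}$'' and carry this through unchanged. But if the good slot sits at position $i$ in the sequence and the $i$ preceding admissible mutations have each raised the fitness by up to $2$, the current fitness may be as large as $f+2i$, so the number of good elementary mutations available is only $\ge \tfrac{n-(f+2i)}{2}$, not $\ge \tfrac{n-f}{2}$. The paper handles this by writing $\frac{n-(f+2i)}{n(n-1)}$ inside the sum over $i$, then uniformly lower-bounding it by $\frac{n-(f+2\ell-2)}{n(n-1)} = \frac{1}{n+f-3}\cdot\text{base}$, which is exactly what produces the prefactor $\frac{\ell}{n+f-3}$ and the extra copy of the base. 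Your route---first weakening $n-f$ to $n-f-2$ and only afterwards invoking the $2\ell$ slack in the base---arrives at the same final expression but does not, as written, justify the intermediate inequality $p^{+}_{\ell}\ge \ell\cdot\frac{n-f}{n(n-1)}\cdot\text{base}^{\ell-1}$. The fix is immediate (use the drifted count for the good slot too), and the rest of your argument, including the $\ell=1$ sanity check, then goes through verbatim.
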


\begin{proof}
Similarly to the previous lemma, we estimate the probability of decreasing the fitness by at least 1 by applying $\ell$ randomly chosen elementary mutations as follows:
\begin{align}
    p^{-}_{\ell} &\ge \prod_{i=0}^{\ell-1} \left(\frac{2(f-2i)}{n-1} - \frac{(f-2i)(f-2i+1)}{n(n-1)}\right) \nonumber\\
                 &\ge \left(\frac{2(f-2\ell+1)}{n-1} - \frac{f(f+1)}{n(n-1)}\right)^{\ell} \nonumber\\
                 &= \left(\frac{f+1}{n-1}\left(2 - \frac{f}{n}\right) - \frac{4\ell}{n-1}\right)^{\ell}_{.} \label{probability-1-negative}
\end{align}
Now we estimate the probability of an individual to belong to the first clause of Definition~\ref{ordinary-def} as follows:
\begin{align}
    p^{+}_{\ell} &\ge \sum_{i=0}^{\ell - 1} \left(\frac{n-(f+2i)}{n(n-1)} \cdot \prod_{j=0}^{\ell-2} \frac{(n-(f+2j+2))(n+(f+2j-3))}{n(n-1)}\right) \nonumber\\
                 &\ge \sum_{i=0}^{\ell - 1} \frac{n-(f+2\ell-2)}{n(n-1)} \cdot \left(\frac{(n-(f+2\ell-2))(n+f-3))}{n(n-1)}\right)^{\ell-1} \nonumber\\
                 &= \frac{\ell}{n+f-3} \left(\frac{(n-(f+2\ell-2))(n+f-3))}{n(n-1)}\right)^{\ell} \nonumber\\
                 &= \frac{\ell}{n+f-3} \left(1 - \frac{2\ell (n+f-3) + (f-2)(f-3)}{n(n-1)}\right)^{\ell}_{.} \label{probability-1-positive}
\end{align}

We prove the lemma by combining \eqref{probability-1-negative} and \eqref{probability-1-positive}, remembering that there is exactly one first-clause individual
(which can appear as any of the $\lambda$ individuals) and $\lambda-1$ second-clause individuals.
\end{proof}

Lemma~\ref{lemma-t1} is noticeably harder to use than Lemma~\ref{lemma-t0}.
However, one can notice that, whenever $c_1 \cdot n \le f \le c_2 \cdot n$ for constants $0 < c_1 < c_2 < 1$,
all the exponentiation bases in both \eqref{probability-1-negative} and \eqref{probability-1-positive}
are bounded by constants that are less than one.
In particular, for $f = n/2$ the probability is roughly $\frac{\lambda\ell}{1.5 n} \cdot 0.75^{\lambda\ell}$,
which is maximized at $\lambda\ell = \log(4/3)$ to $0.176/n$. As a result, we can only bound the progress \ollga\ 
in the middle of optimization by $\Theta(n)$ fitness evaluations in expectation per each single fitness improvement,
thus yielding $O(n^2)$ running time in this region.

\begin{lemma}\label{lemma-t2}
For \ham\ with a problem size $n$, an iteration of the \ollga\ with the parent fitness $f \ge 3$
and $\lambda$ mutants created by applying $\ell$ elementary mutations is good with respect to threshold $\tau=-2$
with probability at least:
\begin{align*}
\frac{\lambda\ell \cdot \max\{1, (n-f)-2(\ell-1)\}}{n(n-1)} e^{-\frac{2(\ell-1)(2n-3)}{(n-2)(n-3)} - \frac{2(\lambda-1)\ell \cdot \min\{n, (n-f) + 2(\ell-1)\}}{n-1 - 2\min\{n,(n-f) + 2(\ell-1)\}}.}
\end{align*}
\end{lemma}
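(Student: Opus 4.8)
The plan is to follow the same three-step template that produced Lemmas~\ref{lemma-t0} and~\ref{lemma-t1}. Writing $d=n-f$ for the Hamming distance, I would bound separately: (i) the probability $p^{-}_{\ell}$ that a fixed mutant is \emph{structured-bad} for $\tau=-2$, i.e.\ every one of its $\ell$ elementary exchanges lowers the fitness by exactly $2$ as it is applied; and (ii) the probability $p^{+}_{\ell}$ that a fixed mutant is \emph{structured-good}, i.e.\ it carries a good elementary exchange on some positions $i_1,i_2$ and each of its other $\ell-1$ exchanges avoids both $i_1$ and $i_2$. Then, since the $\lambda$ mutants are generated independently and the good mutant may occupy any of the $\lambda$ slots, I would conclude $p_{\tau=-2}\ge\lambda\cdot p^{+}_{\ell}\cdot(p^{-}_{\ell})^{\lambda-1}$. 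The key simplification that is special to $\tau=-2$ is that the ``increase the fitness by at least $\tau$'' clause of Definition~\ref{ordinary-def} is vacuous, since no elementary exchange can lower the fitness by more than $2$; this is precisely why the companion exchanges of the good mutant carry no fitness constraint, and it is what makes the $p^{+}_{\ell}$ estimate below $f$-free in its exponential part.

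For $p^{-}_{\ell}$: an elementary exchange lowers the fitness by exactly $2$ exactly when it swaps two currently guessed-right positions, so with $g$ guessed-right positions there are $\binom{g}{2}$ of them among the $\binom{n}{2}$ possible, and one such move drops $g$ by $2$. Hence $p^{-}_{\ell}\ge\prod_{i=0}^{\ell-1}\frac{(f-2i)(f-2i-1)}{n(n-1)}$, bounding the shrinking denominators of the without-replacement sampling by $\binom{n}{2}$. Bounding every factor from below by the last one, using $\frac{g(g-1)}{n(n-1)}\ge 1-\frac{2(n-g)}{n-1}$ with $g=f-2(\ell-1)$, truncating $n-g$ by $\min\{n,(n-f)+2(\ell-1)\}$ to keep the base in $(0,1)$, and then applying the standard estimate $(1-x)^{\ell}\ge e^{-x\ell/(1-x)}$, one raises to the power $\lambda-1$ and obtains exactly the second exponential factor in the statement.

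For $p^{+}_{\ell}$: I would split over which of the $\ell$ ordered slots carries the good exchange, call its positions $i_1,i_2$, and require the $\ell-1$ companion slots to carry exchanges that avoid both $i_1$ and $i_2$. The latter happens with probability $\binom{n-2}{2}/\binom{n}{2}=1-\frac{2(2n-3)}{n(n-1)}$ per companion, so over $\ell-1$ companions, via $(1-x)^{\ell-1}\ge e^{-x(\ell-1)/(1-x)}$, one gets the first exponential factor $e^{-2(\ell-1)(2n-3)/((n-2)(n-3))}$; it is $f$-independent exactly because the companions carry no fitness condition. For the good slot I would use that at distance $d$ there are at least $\lceil d/2\rceil$ good elementary exchanges (recorded in the preliminaries); a careful count, pessimistically letting each of the up-to-$\ell-1$ companions that precede the good slot shift the distance by up to $2$ so that at least $\lceil((n-f)-2(\ell-1))/2\rceil$ of them remain usable (and at least one when that quantity is non-positive), contributes the prefactor $\frac{\max\{1,(n-f)-2(\ell-1)\}}{n(n-1)}$ once the $\frac{1}{2}$'s cancel against $\binom{n}{2}=\frac{n(n-1)}{2}$. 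Summing over the $\ell$ slots multiplies by $\ell$, giving $p^{+}_{\ell}\ge\frac{\ell\,\max\{1,(n-f)-2(\ell-1)\}}{n(n-1)}\,e^{-2(\ell-1)(2n-3)/((n-2)(n-3))}$, and multiplying the three pieces reproduces the claim.

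I expect the main obstacle to be exactly that ``careful count'' inside $p^{+}_{\ell}$: one has to exhibit an explicit sub-event of the first clause of Definition~\ref{ordinary-def} in which a good elementary exchange is still present at the chosen slot and position-disjoint from the (otherwise essentially unconstrained) companions, which is what forces the conservative ``each companion moves the distance by at most $2$'' accounting and the $\max\{1,\cdot\}$ truncation, and one must check that these companions — required only to avoid $i_1,i_2$ — do meet the ``$\ge\tau$'' clause, which for $\tau=-2$ they do automatically. Everything else is the same routine inequality-chasing as in the two preceding lemmas: telescoping the products, $\frac{g(g-1)}{n(n-1)}\ge 1-\frac{2(n-g)}{n-1}$, $(1-x)^{k}\ge e^{-xk/(1-x)}$ (equivalently the $(1-1/x)^{x-1}\ge e^{-1}$ fact already invoked in the text), and keeping the $\min$/$\max$ truncations consistent so that all displayed bases stay in $(0,1)$, which is the case in the endgame regime $(n-f)+2(\ell-1)\ll n$ where this lemma is intended to be applied.
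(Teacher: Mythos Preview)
Your proposal is correct and follows essentially the same approach as the paper's own proof: the same decomposition $p_{\tau=-2}\ge\lambda\,p^{+}_{\ell}(p^{-}_{\ell})^{\lambda-1}$, the same product bound for $p^{-}_{\ell}$ via $\frac{(f-2i)(f-2i-1)}{n(n-1)}$ followed by $\frac{g(g-1)}{n(n-1)}\ge 1-\frac{2(n-g)}{n-1}$ and the $(1-x)^k\ge e^{-kx/(1-x)}$ estimate, and the same sum-over-slots argument for $p^{+}_{\ell}$ with the companion factor $\frac{(n-2)(n-3)}{n(n-1)}$ and the pessimistic replacement of each summand by the worst one. Your explicit remark that the ``$\ge\tau$'' clause is vacuous at $\tau=-2$, making the companion factor $f$-free, is exactly the mechanism the paper uses but states more tersely.
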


\begin{proof}
Similarly to Lemma~\ref{lemma-t0}, to ease the notation, we always assume that the fitness is always within $[0;n]$ and omit the $\min$ and $\max$ clauses in the corresponding locations.

There are exactly $f(f-1)/2$ elementary mutations which decrease the fitness by 2, hence the probability component
that corresponds to have a bad mutant in the right shape reads as follows:
\begin{align}
    p^{-}_{\ell} &\ge \prod_{i=0}^{\ell - 1} \frac{(f - 2i)(f - 2i - 1)}{n(n-1)}
                  \ge \left(\frac{(f - 2\ell + 2)(f - 2\ell + 1)}{n(n-1)}\right)^{\ell} \nonumber\\
                 &\ge \left(1 - \frac{2(n - f + 2\ell - 2)}{n-1}\right)^{\ell}
                  \ge e^{-\frac{2\ell \cdot (n - f + 2(\ell - 1))}{n-1 - 2((n-f) + 2(\ell-1))}.} \label{probability-2-negative}
\end{align}

On the other hand, good mutants are now allowed to mutate the remaining positions in any possible way, which results in:
\begin{align}
    p^{+}_{\ell} &\ge \sum_{i=0}^{\ell-1} \frac{n-(f+2i)}{n(n-1)} \left(\frac{(n-2)(n-3)}{n(n-1)}\right)^{\ell-1} \nonumber\\
                 &\ge \frac{\ell \cdot ((n-f)-2(\ell-1))}{n(n-1)} \left(1 - \frac{2(2n-3)}{n(n-1)}\right)^{\ell-1} \nonumber\\
                 &\ge \frac{\ell \cdot ((n-f)-2(\ell-1))}{n(n-1)} e^{-\frac{2(\ell-1)(2n-3)}{(n-2)(n-3)}.} \label{probability-2-positive}
\end{align}

We prove the lemma by combining \eqref{probability-2-negative} and \eqref{probability-2-positive}, remembering that there is exactly one first-clause individual
(which can appear as any of the $\lambda$ individuals) and $\lambda-1$ second-clause individuals.
\end{proof}

Lemma~\ref{lemma-t2} is much more similar to Lemma~\ref{lemma-t0} in the shape of the final result, but not entirely symmetrical.
For instance, when the distance to the optimum $n-f$ is $O(1)$, the lower bound on success probability is roughly proportional to
$\frac{\lambda^2}{n^2} e^{-\lambda^3/n}$ assuming $\lambda \approx \ell$.
This is maximized at $\lambda = (2n/3)^{1/3}$, which yields the success probability $\Omega(n^{-4/3})$, so
finding two last positions requires $O(n^{4/3})$ fitness evaluations in expectation with the optimal choice of $\lambda$.

Now we are going to formulate the main result of this section.

\begin{theorem}\label{runtime-best-parameters}
With the optimal fitness-dependent parameters, the running time of the \ollga\ on \ham\ is $O(n^2)$.
\end{theorem}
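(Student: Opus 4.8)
The plan is to split the fitness range $[0,n]$ into five consecutive regions, assign to each a fitness-dependent value of $\lambda$ (with $\ell\sim\mathcal{B}(m,\lambda/m)$, so $E[\ell]=\lambda$), apply to each region whichever of Lemmas~\ref{lemma-t0}, \ref{lemma-t1} and~\ref{lemma-t2} is usable there, and add up the per-level costs. The accounting skeleton is standard. Since the \ollga\ is elitist, the fitness $f$ is non-decreasing, so the optimum is reached after at most $n$ strict improvements; if, while the parent has fitness $f$, every iteration is a good iteration with respect to some threshold $\tau$ with probability at least $p_f$ --- and the lemmas give such a bound uniformly over all permutations of fitness $f$ --- then the number of iterations spent at fitness level $f$ is stochastically dominated by a geometric variable with success probability $p_f$. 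As one iteration costs $2\lambda'=\Theta(\lambda_f)$ fitness queries, $E[T]\le\sum_{f=0}^{n-1}\Theta(\lambda_f/p_f)$, and it remains to pick the schedule so that this sum is $O(n^2)$.

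Two routine reductions turn the fixed-$\ell$ lemma bounds into lower bounds on $p_f$. First, $\ell\sim\mathcal{B}(m,\lambda/m)$ concentrates around its mean $\lambda$, so a Chernoff bound gives $\Pr[\ell\in[\max\{1,\lambda/2\},\,2\lambda]]=\Omega(1)$ uniformly in $\lambda\ge 1$, and on this window every lemma bound stays within a constant factor of its value at $\ell=\lambda$ (the prefactors are linear in $\ell$, and in every region below the exponents are $O(1)$, so doubling $\ell$ changes them by only a constant). Second, conditioned on a good iteration with respect to $\tau$, the unique good mutant is selected for crossover (its fitness exceeds that of any other mutant by at least $1-\tau\ge 1$), and a given crossover offspring equals the parent plus exactly the single good elementary mutation with probability $\tfrac1\lambda(1-\tfrac1\lambda)^{\ell-1}$, which is $\Omega(1/\lambda)$ whenever $\lambda\ge 2$ and $\ell\le 2\lambda$; over the $\lambda'\ge\lambda$ crossover trials an improvement therefore occurs with probability $\Omega(1)$. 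Hence $p_f$ equals, up to a constant factor, the relevant lemma bound at $\ell=\lambda$, as long as we keep $\lambda\ge 2$ in every region.

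Writing $d=n-f$, the schedule is: (i) for $f\le c_1\sqrt n$ set $\lambda=\Theta(\sqrt n)$ and $\tau=0$, where Lemma~\ref{lemma-t0} gives $p_f=\Omega(1)$, contributing $O(\sqrt n\cdot\sqrt n)=O(n)$; (ii) for $c_1\sqrt n\le f\le c_2 n$ set $\lambda=\Theta(n/f)$ and $\tau=0$, where all exponents in Lemma~\ref{lemma-t0} are $O(1)$ (since $f<n/2$ and $\ell=O(\sqrt n)$), giving $p_f=\Omega(n/f^2)$ and contribution $\sum_f\Theta(f)=O(n^2)$; (iii) for $c_2 n\le f\le(1-c_2)n$ set $\lambda=\Theta(1)$ and $\tau=-1$, where both exponentiation bases of Lemma~\ref{lemma-t1} are constants in $(0,1)$ (the remark following that lemma), giving $p_f=\Omega(1/n)$ and contribution $\Theta(n)\cdot\Theta(n)=\Theta(n^2)$; (iv) for $n^{1/3}\le d\le c_2 n$ set $\lambda=\Theta(\sqrt{n/d})$ and $\tau=-2$, where $\ell=O(d)$ makes the $\max$-term of Lemma~\ref{lemma-t2} equal to $\Theta(d)$ and all exponents $O(1)$, giving $p_f=\Omega(1/n)$ and contribution $\sum_d\Theta(n^{3/2}d^{-1/2})=O(n^2)$; (v) for $d\le n^{1/3}$ set $\lambda=\Theta(n^{1/3})$ and $\tau=-2$, where Lemma~\ref{lemma-t2} gives $p_f=\Omega(n^{-4/3})$ and contribution $n^{1/3}\cdot\Theta(n^{5/3})=O(n^2)$. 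Summing the five regions yields $O(n^2)$.

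The only step beyond such elementary estimates is to check that the three lemmas, each usable on just part of the range, tile $[0,n]$ with matching breakpoints and mutually consistent constants. The $\tau=0$ bound degrades once $d=o(n)$ (its factor $e^{-\Theta(\ell f/(n-f))}$ blows up), while the $\tau=-2$ bound is vacuous unless $d+2(\ell-1)<n/2$, i.e.\ essentially $f>n/2$; so neither regime covers a neighbourhood of $f=n/2$, and this is exactly where the local-search-like mode $\tau=-1$ of Lemma~\ref{lemma-t1} has to be inserted (with $\lambda$ a constant $\ge 2$). At the junction $d=\Theta(n^{1/3})$ between regions~(iv) and~(v) one must verify that $\Theta(\sqrt{n/d})$ and $\Theta(n^{1/3})$ agree in order, that the condition $\ell=O(d)$ needed for the $p^+_\ell$ estimates holds on both sides, and that $\lambda\ge 2$ with all exponents $O(1)$ at every boundary. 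Choosing the region constants (in particular $c_2$ small enough that region~(iii)'s bases stay bounded away from $0$ and $1$ and that $d+2\ell<n/2$ on region~(iv)) and the hidden constants in the $\Theta(\cdot)$'s so that all these requirements hold simultaneously is where the care of the argument lies; with the schedule fixed, each per-region estimate is of the kind already spelled out in the remarks following Lemmas~\ref{lemma-t0}--\ref{lemma-t2}.
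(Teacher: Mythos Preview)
Your proposal is correct and follows essentially the same approach as the paper: the same five fitness regions, the same fitness-dependent choices of $\lambda$, the same assignment of thresholds $\tau\in\{0,-1,-2\}$ and lemmas to regions, and the same fitness-level accounting summing to $O(n^2)$. You are in fact more explicit than the paper in two places it glosses over (the Chernoff reduction from the random $\ell$ to $\ell=\Theta(\lambda)$, and the $\Omega(1)$ crossover success probability), and your insistence on $\lambda\ge 2$ in region~(iii) rather than the paper's $\lambda=1$ is an immaterial variant.
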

\begin{proof}
We use the method of fitness levels, hence we pessimistically assume that each fitness improvement is minimum possible.
We consider the following ranges of the fitness value $f$:
\begin{enumerate}
    \item $f = O(\sqrt{n})$. In this case, we set $\lambda = \Theta(\sqrt{n})$. As $\ell = \Theta(\lambda)$ with the constant probability,
          we choose $\tau=0$ and apply Lemma~\ref{lemma-t0}, which yields a constant probability for an iteration to be good.
          In this mode, the algorithm increases the fitness by at least 1 in $O(1)$ iterations and $O(\sqrt{n})$ fitness evaluations.
    \item $f = \Omega(\sqrt{n})$, $f < c_1 n$ for some constant $0 < c_1 < 1/2$. In this case, we still use $\tau=0$ and hence Lemma~\ref{lemma-t0},
          but we set $\lambda = \Theta(n/f)$. The probability that an iteration is good is $\Omega(n/f^2)$, and the expected number of fitness
          evaluations until the fitness update is $O(f^2 / n \cdot n / f) = O(f)$. The total number of fitness evaluations spent in this phase is \[\sum_{f=\sqrt{n}}^{c_1 n} O(f) = O(n^2).\]
    \item $c_1 n \le f \le c_2 n$ for some constants $0 < c_1 < 1/2 < c_2 < 1$. In this case, we choose $\tau=-1$ and use Lemma~\ref{lemma-t1}.
          We set $\lambda = 1$, so that the algorithm essentially performs local search with the improvement probability of $\Omega(1/n)$,
          the expected time until improvement $O(n)$ and the total number of fitness evaluations in this phase being $O(n^2)$.
    \item $c_2 n < f < n - \Theta(n^{1/3})$. We choose $\tau=-2$ and use Lemma~\ref{lemma-t2}.
          From this lemma, the optimal setting $\lambda = \Theta(\sqrt{n/(n - f)})$ can be derived for the current fitness.
          The probability that an iteration is good is $\Theta(\lambda\ell(n-f)/n^2) = \Theta(1/n)$,
          so the expected number of fitness evaluations until an improvement is $\Theta(\sqrt{n^3/(n - f)})$.
          The total number of fitness evaluations spent in this phase is
          \begin{align*}
            \sum_{f=c_2 n}^{n - n^{1/3}} O\left(\frac{n^{3/2}}{\sqrt{n-f}}\right) = O(n^2).
          \end{align*}
    \item $n - \Theta(n^{1/3}) < f$. We still use $\tau=-2$ and Lemma~\ref{lemma-t2}, which this time recommends setting $\lambda = \Theta(n^{1/3})$.
          This leads to the improvement probability of $\Omega(n^{-4/3})$, the expected number of evaluations until improvement $O(n^{5/3})$
          and the total number of evaluations spend in this phase to be $O(n^2)$.
\end{enumerate}

As a result, each range is traversed by the algorithm in time $O(n^2)$. Since the number of ranges is constant, this proves the theorem.
\end{proof}

\section{Experiments}\label{sec:experiments}

In this section, we present our experimental results. The implementation of all the experiments is available on GitHub\footnote{\url{https://github.com/mbuzdalov/generic-onell}}.
For the sake of performance, we use a set of performance-improving modifications similar to what has been done in~\cite{goldman-punch-ppp}:
\begin{itemize}
    \item the number of elementary mutations for mutants is sampled from a conditional distribution $[\ell \sim \mathcal{B}(m,p) \mid \ell > 0]$;
    \item similarly, the number of elementary mutations to take during crossover is sampled from $[s \sim \mathcal{B}(\ell,c) \mid s > 0]$;
    \item if the crossover offspring takes all mutations from the mutant, it is not re-evaluated and hence not counted towards the number of fitness evaluations.
\end{itemize}
For a survey of some modifications of this sort please refer to~\cite{practice-aware}.

\subsection{Running Times}

\begin{figure*}[!t]
\begin{tikzpicture}
\begin{axis}[width=\textwidth, height=0.3\textheight, xmode=log, log basis x=2, xlabel={Problem size}, ylabel={Evaluations / $n^2$},
             enlarge x limits={abs=5pt}, enlarge y limits=false, ymin=0,
             extra y ticks={2}, grid=major,
             legend style={at={(0.35,0.975)}, anchor=north west},
             legend columns=2,
             cycle list name=myplotcycle]
\input{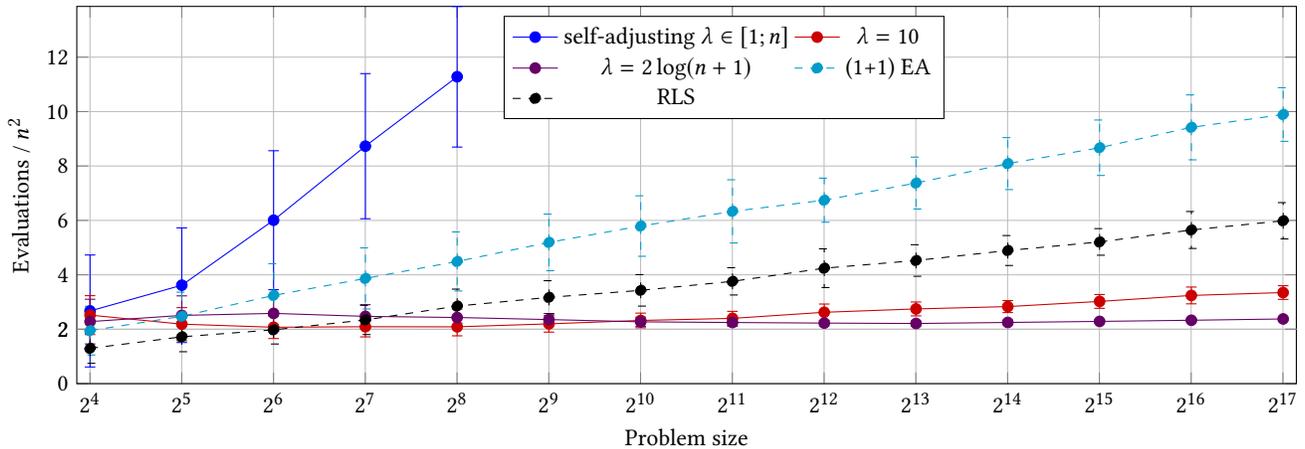}
\end{axis}
\end{tikzpicture}
\caption{Average running times for various algorithms on \ham}\label{exp:runtimes}
\Description{This picture contains a plot that displays
             the experimentally determined average running times
             for the considered algorithms on the Ham problem.}
\end{figure*}

Figure~\ref{exp:runtimes} presents the running times of several algorithms on the \ham\ problem.
To ensure the comparison with popular local search algorithms with the $\Theta(n^2 \log n)$ performance,
we considered the permutation variants of the $(1+1)$ evolutionary algorithm and randomized local search (RLS);
the tested \ea\ also uses the conditionally positive binomial distribution. Since it is unlikely that anyone would ever use
the \ollga\ with the fitness-dependent parameter setting, we considered the following choices:
\begin{itemize}
    \item the static setting $\lambda = 10$;
    \item the default self-adjustment method as in~\cite{doerr-doerr-lambda-lambda-parameters-journal};
    \item the logarithmically capped self-adjustment method as was proposed in~\cite{buzdalovD-gecco17-3cnf};
    \item the problem size dependent static choice $\lambda = 2\log(n+1)$.
\end{itemize}

The logarithmically capped version behaved exactly as the problem size dependent static choice, so we do not plot the former.

The problem sizes were chosen as powers of two from $2^4$ to $2^{17}$; the upper limit was chosen so that the experiments could have been done
in time, as the single run at $n=2^{17}$ reached and exceeded 100 billion fitness evaluations. For every algorithm and every problem size,
100 independent runs were performed; the only exception was the the default self-adjustment method, which started to deteriorate quite early
and so was excluded from the further experimentation.

Figure~\ref{exp:runtimes} presents the running times divided by $n^2$ with the logarithmic abscissa axis and the linear ordinate axis.
In such plots, $\Theta(n^2)$ algorithms produce horizontal plots and $\Theta(n^2 \log n)$ algorithms produce plots with a steady slope.

In Figure~\ref{exp:runtimes} we can easily see that the \ollga\ with the logarithmic choice of $\lambda$ in fact demonstrates a $\Theta(n^2)$ performance.
This is moderately surprising, since the theoretical analysis from Lemma~\ref{lemma-t1} suggests a $\omega(n^2)$ upper bound for the middle range of fitness values
when $\lambda = \Theta(\log n)$. This indicates that a more precise analysis is necessary to fully understand what is happening for these fitness values.
We can also see that this version of the \ollga\ starts outperforming RLS already at a rather small problem size $2^8$, which indicates a high practical efficiency
of the proposed algorithm.

The static parameter version, $\lambda = 10$, can be seen to deviate from the horizontal line towards the higher problem sizes. This is expected, because
theoretical investigations suggest that the performance of this algorithm with a constant population size would be $O((n^2 \log n) / \lambda)$, similar to the \ollga\
on bit strings. Finally, the default self-adjustment method performs much worse than others, which happens because it makes $\lambda$ grow too high in the fitness ranges
where no constant progress is ever possible.

\subsection{Parameter Landscape Analysis}

\newcommand{\xdo}{Distance\phantom{$($}to\phantom{$)$}optimum}
\newcommand{\threedimensionalplot}[4]{%
\begin{tikzpicture}
\begin{axis}[width=#3\textwidth, height=0.2332\textheight, view={0}{90}, shader=interp,
             colormap name=viridis, point meta min=0.1,
             xlabel={#4}, ymode=log, log basis x=2, log basis y=2,
             /pgf/number format/1000 sep={}, #2]
    \addplot3[surf] file {data/perm-3d-#1.txt};
\end{axis}
\end{tikzpicture}}

\begin{figure*}[!t]
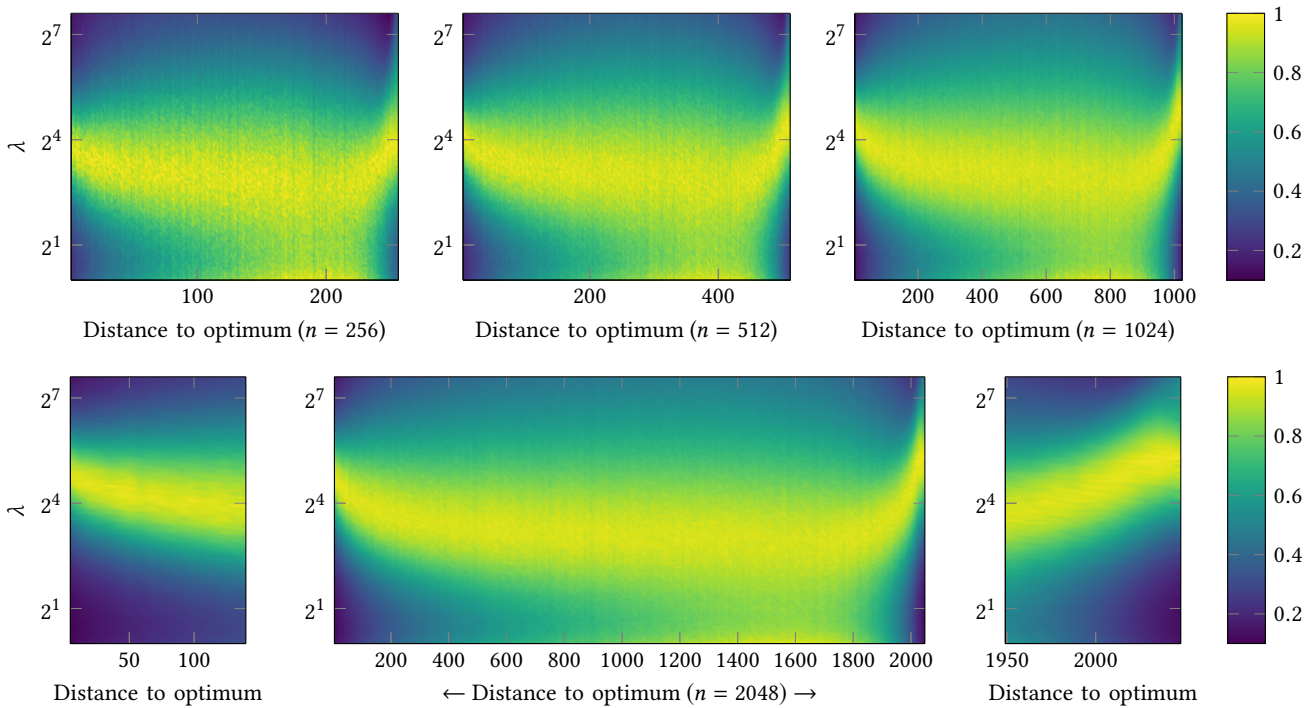

\begin{tabular}{ccc}
\threedimensionalplot{256}{ylabel={$\lambda$}}{0.3333}{\xdo\ ($n=256$)} &
\threedimensionalplot{512}{}{0.3333}{\xdo\ ($n=512$)} &
\threedimensionalplot{1024}{colorbar}{0.3333}{\xdo\ ($n=1024$)}\\
\end{tabular}
\begin{tabular}{ccc}
\threedimensionalplot{2048}{ylabel={$\lambda$}, xmax=140}{0.22}{\xdo} &
\threedimensionalplot{2048}{}{0.53}{$\leftarrow$ \xdo\ $(n=2048)$ $\rightarrow$} &
\threedimensionalplot{2048}{colorbar, xmin=1949}{0.22}{\xdo}
\end{tabular}
\caption{Parameter landscape analysis of the \oll\ GA for \ham}\label{exp:landscape}
\Description{This picture contains plots of relative performance
             of the (1+(lambda,lambda)) genetic algorithm on the Ham problem,
             depending on the distance to the optimum and the value of parameter lambda.}
\end{figure*}

To get a better understanding for how the optimal parameters actually look like, we performed a simple landscape analysis.
For a few problem sizes $n \in \{2^8, 2^9, 2^{10}, 2^{11}\}$ we performed 10000 runs of the \ollga\ with the fixed value of $\lambda$
taken from a multiplicative lattice with a step of $1.05$. For every $\lambda$ and every fitness (or, put alternatively, every possible distance to the optimum)
we approximated the expected number of fitness evaluations until improvement. Figure~\ref{exp:landscape} presents this as a heatmap,
where the color at the point $(d,\lambda)$ stands for the relative probability of improvement compared to the maximum of $(d,\lambda')$ over all tested $\lambda'$.
The yellow color signifies near-optimal values of $\lambda$, whereas colors towards violet indicate inferior values.

Most theoretical insights appear to be visible in the pictures.
In particular, the best values of $\lambda$ for maximal distances appear to be very close to $\sqrt{n}$,
and the distance range of roughly $[2030;2048]$ where the optimal $\lambda$ remains unchanged can be seen in the lower right picture.
The small-distance end also features the best values of $\lambda$ close to the optimum that increase with $n$ roughly as $2n^{1/3}$.

The only unexpected thing is that the optimal $\lambda$ near $f=n/2$ appears to be non-constant, probably logarithmic,
which could not be prediced from our analysis, but which possibly explains the unexpectedly good performance of the $\lambda = 2 \log(n+1)$ version.

\section{Conclusion}

We presented an extension to the \ollga\ that is capable of efficiently solving permutation-based problems,
and whose structure is robust enough to allow applications to completely different problem representations.
We conducted theoretical analysis of this algorithm on the permutation-based problem \ham, the Hamming distance,
which is probably the first theoretical analysis of a black-box crossover-based algorithm on permutation-based problems.

This combination of an algorithm and a problem poses a number of intriguing questions to the theory community.
We have seen five different modes of optimal parameter settings, which require greater parameter values
at both ends of the fitness range, and which is quite far from being perfectly symmetrical. This makes
it a good benchmark problem for self-adjustment methods that should be able to increase and decrease the parameter in a timely manner.
What is more, we still cannot explain why this algorithm is more efficient around the fitness $f = n/2$ than we can show
theoretically. This may force the development of better proof methods inspired by the particular features of the \ollga.
Finally, we hope to investigate the \ollga\ on other permutation-based problems.

\begin{acks}
This research was supported by the Russian Science Foundation, grant number 17-71-20178.
\end{acks}

\bibliographystyle{ACM-Reference-Format}
\bibliography{../../../bibliography} 

\end{document}